\icmltitlerunning{A Bayesian Approach to Invariant Deep Neural Networks}
\newtheorem{theorem}{Definition}
\newtheorem{lemma}[theorem]{Lemma}
\begin{document}

\twocolumn[
\icmltitle{A Bayesian Approach to Invariant Deep Neural Networks}



\icmlsetsymbol{equal}{*}

\begin{icmlauthorlist}
\icmlauthor{Nikolaos Mourdoukoutas}{eth}
\icmlauthor{Marco Federici}{ams}
\icmlauthor{Georges Pantalos}{eth}
\icmlauthor{Mark van der Wilk}{imp}
\icmlauthor{Vincent Fortuin}{eth}
\end{icmlauthorlist}

\icmlaffiliation{eth}{ETH Zürich, Zürich, Switzerland}
\icmlaffiliation{ams}{University of Amsterdam, Amsterdam, Netherlands}
\icmlaffiliation{imp}{Imperial College London, London, United Kingdom}

\icmlcorrespondingauthor{Nikolaos Mourdoukoutas}{nmourdou@ethz.ch}

\icmlkeywords{Machine Learning, ICML, Bayesian Neural Networks, Invariance, Group theoretic methods in Machine Learning}

\vskip 0.3in
]



\printAffiliationsAndNotice{}  

\begin{abstract}
We propose a novel Bayesian neural network architecture that can learn invariances from data alone by inferring a posterior distribution over different weight-sharing schemes. 
We show that our model outperforms other non-invariant architectures, when trained on datasets that contain specific invariances. The same holds true when no data augmentation is performed. 
\end{abstract}

\section{Introduction}
\label{introduction}
Deep learning models interpolate famously well on data that are generated from the training distribution. Nevertheless, when it comes to generalizing to out-of-distribution examples (e.g., transformed inputs), their predictive potential is more restricted.
For example, a classifier might be able to correctly predict the label of a handwritten digit, but could easily fail when the digit is rotated. 

A typical approach to solve this problem is to perform data augmentation (DA), where one includes transformed inputs in the training set and consequently boosts the performance of the learned model when presented with similar examples. However, this method is not guaranteed to be successful \cite{lyle2020benefits}.
This is no surprise, as learning to label an image that is rotated by $90^{\circ}$, for instance, does not imply generalization to different degrees of rotation.

A different class of methods emerges from the fact that for a linear map to be invariant under some transformation, there must be a specific weight-sharing scheme. This is a consequence of Schur's lemma \cite{pmlr-v80-kondor18a}. To that end, if one aims to be robust against a specific type of transformation, it is sufficient to know the corresponding way the parameters of the networks should be shared.

\subsection{Contributions}

We propose a method to learn such weight-sharing schemes from data. As a proof of concept, we focus on being invariant to two types of transformations applied on images, namely rotations and flips.
However, our algorithm can be applied to any other choice of symmetry, as long as the corresponding weight-sharing scheme is available.
Apart from achieving good performance during inference, our model is able to learn such invariances from data.
This is achieved by specifying a probability distribution over the weight-sharing schemes for the input layer, thus formulating a Bayesian neural network. We specify a prior over the parameters of this distribution and then perform MAP inference.
We empirically verify and illustrate the capabilities of our model on the MNIST dataset.

\section{Background}

In order to construct weight-sharing schemes, upon which we will later define a distribution, we will use the concept of the so-called Reynolds' operator, which has been used for similar reasons in previous work \citep{yarotsky2018universal, vanderpol2021mdp, mouli2021neural}. While our notation will be partly adapted from \citet{mouli2021neural}, we would like to highlight that our approach is quite different. We do not employ any causal mechanisms nor do we make any assumptions on the data-generating process, as our method does not entail any data augmentation.

Suppose we wish to perform classification of images of shape $3\times n \times n$.
We flatten the images and consider them as vectors in the input space $\mathcal{X}= \mathbb{R}^{3n^{2}}$.
A function $f: \mathcal{X}  \rightarrow \mathcal{X}$ is invariant to a linear transformation, represented by a matrix $A$, if the following equality holds for all $x \in \mathcal{X}$: $f(Ax)=f(x)$. In this case, we call $A$ a symmetry of $\mathcal{X}$. If we have another symmetry $B$, then it easily follows that $AB$ is a symmetry as well. This leads us to consider groups of transformations, with the function composition being the group multiplication.

\begin{theorem}
Let $G$ be a (finite) group of linear automorphisms from $\mathcal{X}$ to itself.
A transformation $\overline{T}$ is called $G$-invariant, if it holds that $\overline{T}(Tx)=\overline{T}(x)$, for all $x \in \mathcal{X}$ and $T \in G$.
\end{theorem}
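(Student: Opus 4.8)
The final statement is a \emph{definition} rather than a proposition (the \texttt{theorem} environment is redefined to print ``Definition''), so strictly speaking there is no proof obligation to discharge: it merely names the property $\overline{T}(Tx)=\overline{T}(x)$ for all $x\in\mathcal{X}$ and all $T\in G$. Accordingly, the plan is not to exhibit a nontrivial $G$-invariant $\overline{T}$ (that belongs to the later weight-sharing / Reynolds construction), but only to verify that the definition is well-posed and coherent with the group structure already noted in the surrounding text, namely that symmetries compose to give symmetries.

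First I would check non-vacuity. Since $G$ is a group, the identity automorphism $I$ lies in $G$, and for $T=I$ the requirement collapses to $\overline{T}(x)=\overline{T}(x)$, which holds for every map. Hence the condition is satisfiable and, in particular, any constant map is $G$-invariant, so the notion does not accidentally define the empty class.

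The one substantive observation that gives the definition its force is that it suffices to impose invariance on a generating set of $G$. If $\overline{T}(Sx)=\overline{T}(x)$ holds for all $x$ and all generators $S$, then for any product $T=S_1\cdots S_k$ one iterates the hypothesis:
\[
\overline{T}(Tx)=\overline{T}\bigl(S_1(S_2\cdots S_k\,x)\bigr)=\overline{T}(S_2\cdots S_k\,x)=\cdots=\overline{T}(x),
\]
a routine induction on word length using only that each $S_i$ maps $\mathcal{X}$ into itself. This is exactly what makes ``$G$-invariance'' a property attached to the group $G$ rather than to an arbitrary list of transformations. The only difficulty I anticipate is expository, not mathematical: one must keep clear that $\overline{T}$ is the arbitrary transformation whose invariance is being defined, as opposed to the fixed linear automorphisms $T\in G$ against which invariance is measured. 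Once that distinction is fixed, nothing further needs to be argued, since the definition carries no existence or uniqueness claim.
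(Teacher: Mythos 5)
You are right that this statement is a definition rather than a theorem (the paper's \texttt{theorem} environment is redefined to print ``Definition''), so the paper supplies no proof of it and none is required; your reading matches the paper's treatment exactly. Your added sanity checks are correct but optional: non-vacuity via $T=\mathrm{Id}$ is immediate, and your generating-set reduction is valid here because $G$ is finite, so inverses of generators are positive powers and every element is indeed a word in the generators alone.
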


An element of the group will act on  $\mathcal{X}$ via its matrix representation. Explicitly, we can express an arbitrary $T \in G$ with $T(x)=Ax$, for some $A \in \mathbb{R}^{3n^{2} \times 3n^{2}}$ and $x \in \mathcal{X}.$

We wish to specify a weight-sharing scheme, which will enable our neural network to generalize well when it is presented with examples that are generated by the aforementioned action on the input space. Formally, a layer with weights $w$, bias $b$, and activation function $\sigma$ is $G$-invariant if $\sigma\left(w^{\top} x+b\right)=\sigma\left(w^{\top} T x+b\right)$ for all $T \in G$.

This will be achieved through the Reynolds operator, which we now introduce.

\begin{theorem}
Let G be a (finite) group of linear automorphisms. The mapping $\overline{T}: x \mapsto \frac{1}{|G|} \sum_{T \in G} T(x)$ is the Reynolds operator of the group.
\end{theorem}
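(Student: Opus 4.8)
The statement as written introduces the Reynolds operator, and the content worth establishing is its defining property: that $\overline{T}$ is $G$-invariant in the sense of the preceding definition, i.e.\ that $\overline{T}(Sx) = \overline{T}(x)$ for every $S \in G$ and every $x \in \mathcal{X}$. The plan is to unfold the definition of $\overline{T}$, absorb the action of $S$ into the averaging sum, and then exploit the group structure of $G$ to reindex the sum so that it coincides with the original average.

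First I would fix an arbitrary $S \in G$ and $x \in \mathcal{X}$ and write
\[
\overline{T}(Sx) = \frac{1}{|G|} \sum_{T \in G} T(Sx).
\]
Since each group element acts linearly through its matrix representation and the group multiplication is function composition, I would rewrite $T(Sx) = (T \circ S)(x) = (TS)(x)$, giving
\[
\overline{T}(Sx) = \frac{1}{|G|} \sum_{T \in G} (TS)(x).
\]

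The crucial step is the reindexing of this sum. Because $G$ is a group, right multiplication by $S$, that is the map $T \mapsto TS$, is a bijection of $G$ onto itself, with inverse $T \mapsto TS^{-1}$. Consequently, as $T$ ranges over all of $G$ the product $TS$ ranges over all of $G$ exactly once, so the multiset of summands $\{(TS)(x) : T \in G\}$ coincides with $\{T'(x) : T' \in G\}$. Substituting $T' = TS$ therefore yields
\[
\frac{1}{|G|} \sum_{T \in G} (TS)(x) = \frac{1}{|G|} \sum_{T' \in G} T'(x) = \overline{T}(x),
\]
which is exactly the claimed invariance.

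I expect the argument to be essentially routine, with the only genuine content being the reindexing step; the point to state carefully is that the bijectivity of right translation is precisely where finiteness of $G$ and the group axioms (closure and existence of inverses) are used. As a secondary remark, the same manipulation applied on the left shows $S\,\overline{T}(x) = \overline{T}(x)$, so the output of $\overline{T}$ lies in the subspace of $G$-invariant vectors and $\overline{T}$ acts as a projection onto it; this is what ultimately justifies composing any layer with $\overline{T}$ to obtain the desired weight-sharing scheme, though it is not needed for the invariance claim itself.
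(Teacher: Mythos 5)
Your proof is correct and follows essentially the same route as the paper's own argument (Lemma~3 in Appendix~A): both absorb the group element into the sum and reindex using the fact that right translation $T \mapsto T \circ S$ is a bijection of $G$ onto itself. Your closing remark that left translation gives $S\,\overline{T}(x)=\overline{T}(x)$, and hence the projection property, is a valid alternative to the paper's derivation of $\overline{T}^2=\overline{T}$ via linearity and $G$-invariance.
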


It can be shown that the Reynolds operator is $G$-invariant and a projection, that is, it holds that $\overline{T}^2=\overline{T}$.
An immediate consequence of the latter is that all its eigenvalues are either 0 or 1. This fact enables the characterization of $G$-invariant layers: a layer has this property if and only if its weights are in the span of the eigenvectors of the Reynolds operator with eigenvalue 1. Proofs of these claims can be found in Appendix~\ref{sec:algebra}.

To summarize, suppose we wish to be invariant to a particular group $G$.
We then proceed by computing the matrix representation of the group's Reynolds operator; computing the eigenvectors $(v_{i})_{i=1}^d$  that correspond to the eigenvalue 1; expressing every neuron in the first hidden layer of the network as $w = \sum_{i=1}^d a_i v_i$; and learning the parameters $(a_i)_{i=1}^d$ for every neuron.
We now introduce our method.

\section{Method}

Consider a collection of groups $\left (G_k\right)_{k=1}^q$ of linear automorphisms, under which we wish to be invariant. Let $\left(V_k\right)_{k=1}^q$ be the matrices that have as columns the eigenvectors which span the eigenspace of the eigenvalue 1 of the corresponding Reynolds operators, where $V_{k} \in \mathbb{R}^{d_k \times 3n^2}$ for $k=1,...,q$. For an illustration of how to compute these matrices, we refer to Appendix~\ref{sec:algebra}.

We aim to learn whether any of these invariances are present in the data (or could potentially be during testing). To this end, we consider a Categorical distribution over the possible symmetries. The corresponding probabilities are annotated by $\left( p_k \right)_{k=1}^{q+1}$, where $p_{q+1}$ is the probability that an observation is not indicative of any symmetry. Of course, it holds that $\sum_{k=1}^{q+1} p_{k}=1$. Let $\boldsymbol{\pi}=\left(p_{1}, ..., p_{q+1}\right)$.
We assume that $ \boldsymbol{\pi} \sim \operatorname{Dir}(\boldsymbol{\alpha})$, where $\boldsymbol{\alpha}=\left(\alpha_1, ..., \alpha_{p+1}\right)$ is the concentration parameter of the Dirichlet distribution and $\alpha _k>0$.  We note that if we are interested in only one group, that is, $q=1$, then we suppose that $\boldsymbol{\pi}=p_1 \sim \operatorname{Beta}(\alpha_1, \alpha_2)$. We will explain how we choose $\alpha$ in both cases later. By specifying a prior over the parameters of interest, we enable maximum a posteriori (MAP) estimation.

We proceed with the construction of our probabilistic invariant input layer. Fix $k \in \left \{1,...,q+1 \right \}$ and consider the mapping $F: \mathbb{R}^{3n^2} \rightarrow \mathbb{R}^m$, which maps the input to the first hidden representation. We construct its weight matrix as follows:
Every neuron in the hidden layer has the option to be invariant with respect to the group $G_k$, with probability $p_{k}$. 
To achieve this, a neuron has to lie in the respective eigenspace. Hence, we first concatenate the matrices $\left(V_k \right)_{k=1}^q$, which are the bases of these spaces, into $V = (p_{1}V_{1},..., p_{q}V_{q})^{\top} \in \mathbb{R}^{d \times 3n^2}$, where $d=\sum_{i=1}^q d_i$.  
Then, for every $j \in \{1,\dots,m\}$, we consider the learnable parameters $(a_{ji}^{k})_{i=1}^{d_k}$, or equivalently $a^j=((a_{ji}^{1})_{i=1}^{d_1},..., (a_{ji}^{q})_{i=1}^{d_q})$.
This results in a learnable matrix $A \in \mathbb{R}^{d\times m}$. Let $I \in \mathbb{R}^{d\times 3n^2}$ be a generalized identity matrix with ones on the main diagonal and zeros everywhere else.
The weight matrix for the first hidden layer is then $W=A^{\top}(V+I p_{q+1}) \in \mathbb{R}^{m \times 3n^2}$.

Essentially, we have defined a Categorical posterior distribution over the weights of our network. This construction allows the network to choose which kind of transformations it should be invariant to, according to the data it is presented. While we treat the probabilities $\boldsymbol{\pi}$ in a Bayesian way, all the other parameters can be treated deterministically. 

Our goal is to perform classification. Let $\mathcal{Y}=\{1,\dots,m\}$ be the output space, where $m$ is the number of classes in the dataset. Then, we wish to minimize the negative log-posterior over the training data points $(x_i, y_i)_{i=1}^n \subseteq \mathcal{X}\times \mathcal{Y}$, that is, we solve
\vspace{-5pt}$$
\arg \min _{\boldsymbol{\pi}, A, W}-\log p(\boldsymbol{\pi} \mid \boldsymbol{\alpha})-\sum_{i=1}^{n} \log p\left(y_{i} \mid {x}_{i}, \boldsymbol{\pi}, A, W\right),$$

\vspace{-5pt}
where $W$ are the learnable parameters for the rest of the layers and the likelihood of the labels is Categorical.
To estimate gradients for the parameters $\boldsymbol{\pi}$, we use the Gumbel-Softmax reparameterization trick \cite{jang2017categorical}. 

The regularization term of the objective function depends on the concentration hyperparameter $\boldsymbol{\alpha}$. It should be chosen to yield an uninformative prior, in order to be as objective as possible when it comes to learning any potential invariances from the data. In the simplest scenario where $q=1$, one can choose a Jeffreys prior \cite{10.2307/97883}, that is, $\left(\alpha_1, \alpha_2 \right)=\left( \frac{1}{2}, \frac{1}{2} \right)$. When $q \geq 2$, a uniform choice over the simplex with $\alpha_i \equiv \alpha$, for some $\alpha>0$ works in a similar way.

The second term tries to learn the right parameter-sharing scheme from the available candidates, while at the same time aiming to correctly classify all the training examples. 
We hypothesize that the model should be able to detect whether there are symmetries in our data in terms of the final MAP estimate of $\boldsymbol{\pi}$. Consequently, it should closely compete with its frequentist counterparts with the proper weight-sharing schemes in each case. 

On the other hand, when trained on non-image data, the estimator should converge to $ \boldsymbol{\hat{\pi}}_{MAP}=(0,..., 0, 1)$, that is, the network should prefer to have a regular multi-layer perceptron (MLP) network structure. 

\begin{figure*}[h]
     \centering
     
     \begin{subfigure}[b]{0.24\textwidth}
         \centering
         \includegraphics[width=\textwidth]{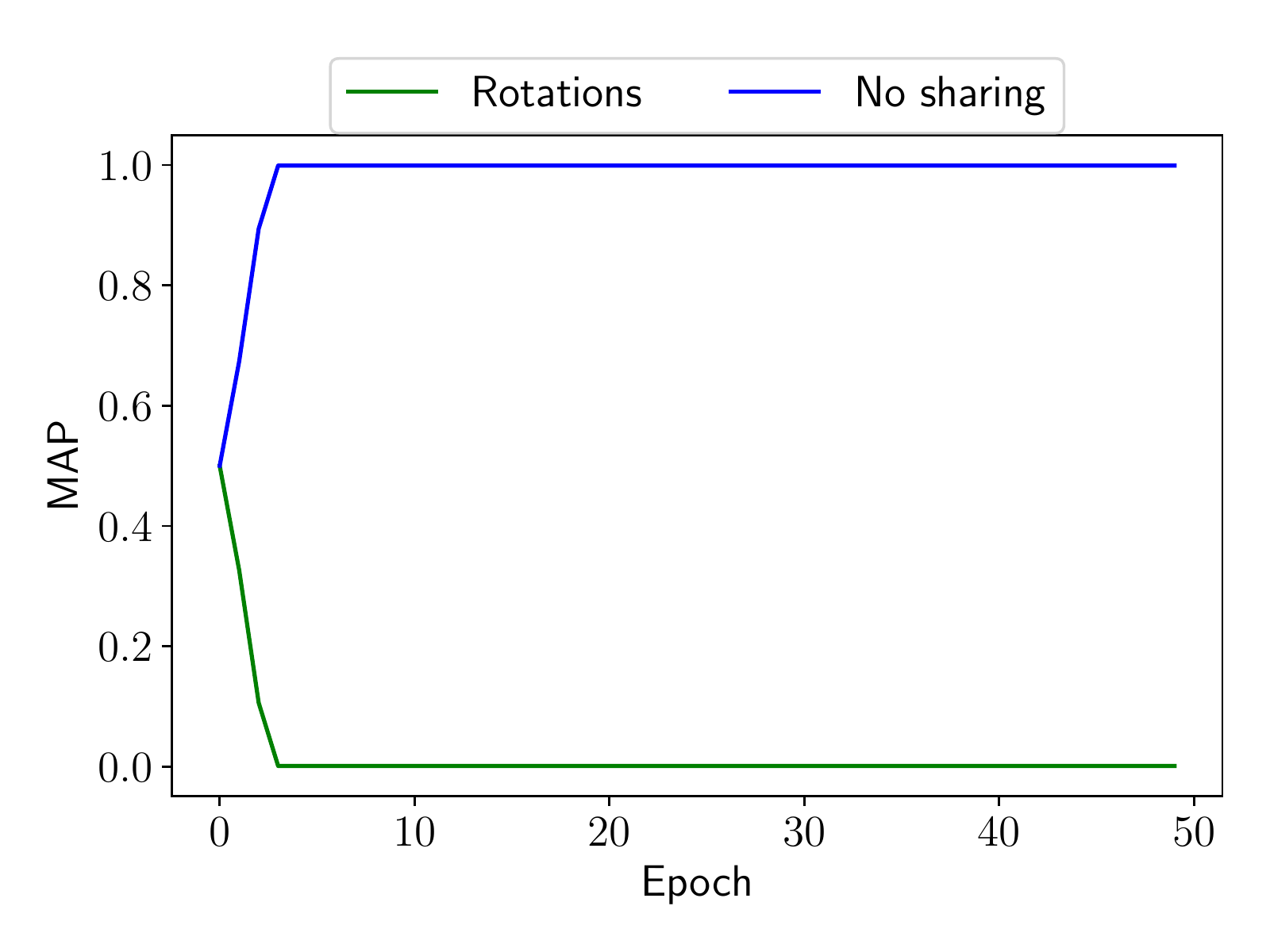}
         \caption{Permuted dataset}
         \label{fig:maps-permuted}
     \end{subfigure}
     \begin{subfigure}[b]{0.24\textwidth}
         \centering
         \includegraphics[width=\textwidth]{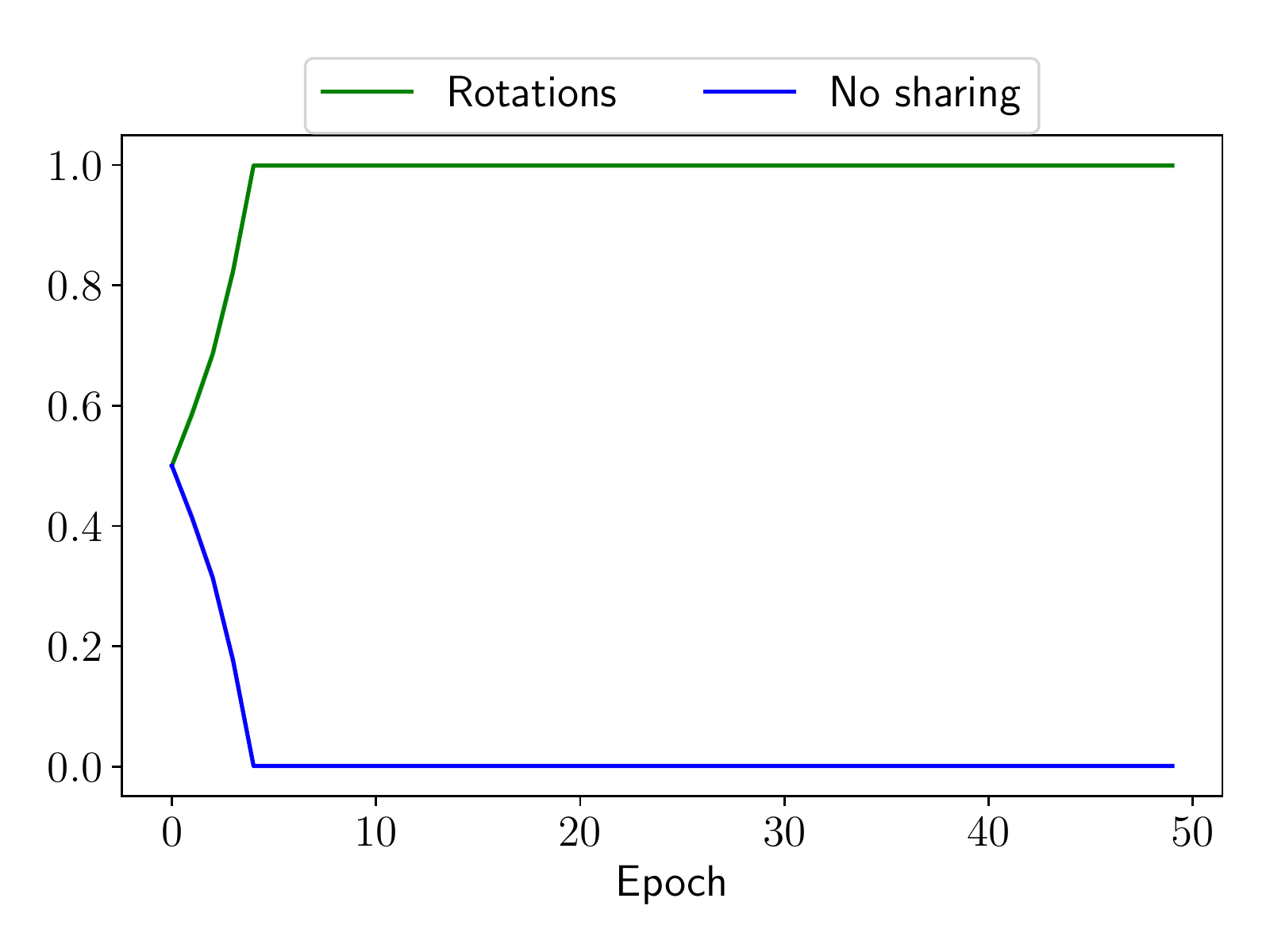}
         \caption{Rotated dataset}
         \label{fig:maps-rotated}
     \end{subfigure}
    \begin{subfigure}[b]{0.24\textwidth}
         \centering
         \includegraphics[width=\textwidth]{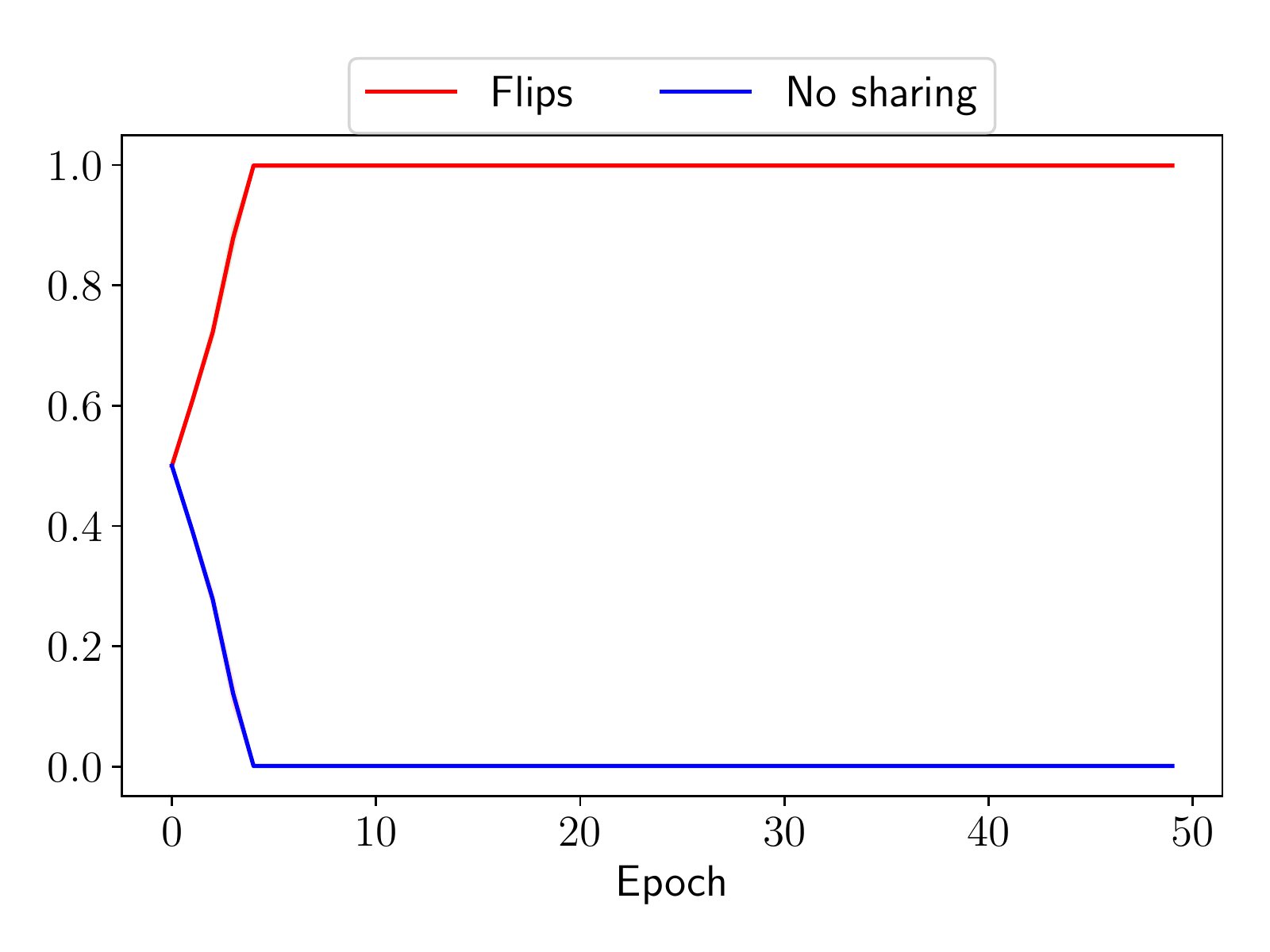}
         \caption{Flipped dataset}
         \label{fig:maps-flipped}
     \end{subfigure}
     \begin{subfigure}[b]{0.24\textwidth}
         \centering
         \includegraphics[width=\textwidth]{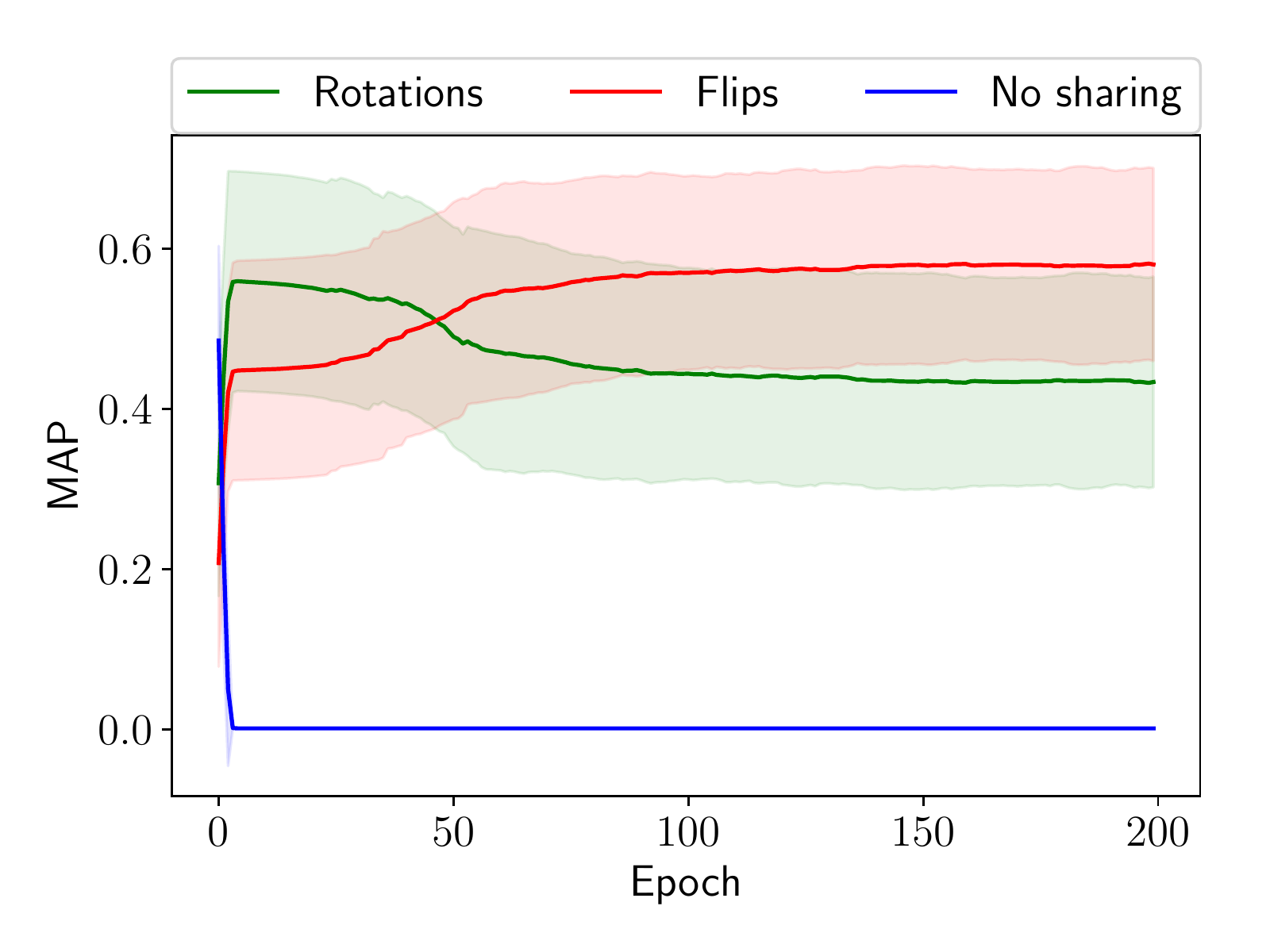}
         \caption{Normal dataset}
         \label{fig:maps-plain}
     \end{subfigure}
        \caption{Means of the MAP estimates for $\boldsymbol{\pi}$. We see that our network converges to the correct invariances in each case.}
        \label{fig:maps}
\end{figure*}

\section{Related Work}

\paragraph{Invariance in Neural Networks}
There have been many works that build invariant networks by finding a proper weight-sharing scheme. This is achieved by designing special kinds of layers, which make the resulting models invariant or equivariant to certain types of symmetries, like rotations and reflections \citep{pmlr-v48-cohenc16,ravanbakhsh2017equivariance,zhang2018learning}. As our goal was also to learn invariances, we were inspired by \citet{zhou2021metalearning}, \citet{mouli2021neural} and \citet{benton2020learning}. In the latter, distributions over invariances are parametrised and then encouraged via a regularisation term. This enables the simultaneous learning of the distributional and the network parameters through backpropagation. Contrary to our method, none of them employed the Bayesian paradigm.
\vspace{-5pt}

\paragraph{Bayesian neural networks}

Bayesian neural networks \citep{mackay1992practical, neal1992bayesian} have gained a lot of popularity recently \citep{blundell2015weight, hernandez2015probabilistic, wenzel2020good, fortuin2021bnnpriors, fortuin2021bayesian, fortuin2021priors, izmailov2021bayesian, immer2021scalable, immer2021improving, dangelo2021stein, dangelo2021repulsive}.
However, to the best of our knowledge, none of these works dealt explicitly with inferring posteriors over weight-sharing schemes to learn invariances from data.
\vspace{-5pt}

\paragraph{Probabilistic approaches to invariance}
Gaussian processes (GPs) are another probabilistic model that has been used to model invariant functions. In \citet{10.1007/978-3-319-00218-7_13}, it is shown that a GP has invariant trajectories if and only if its kernel is argument-wise invariant.
This specific idea is further developed in \citet{GINSBOURGER2016117} and \citet{AFST_2012_6_21_3_501_0}.
A similar method is developed in \citet{vdw2018invgp}, with the difference that there, invariances are incorporated into the structure of the model. This makes it possible to learn symmetries by maximizing the marginal likelihood of the model. These works mainly concern regression and low-dimensional tasks.
However, in \citet{vanderwilk2017convolutional}, a novel algorithm is developed that is suited for classification tasks with high-dimensional data like images.

\section{Experiments}
\label{sec:experiments}

\begin{figure*}[h]
     \centering
     
     \begin{subfigure}[b]{0.20\textwidth}
         \centering
         \includegraphics[width=\textwidth]{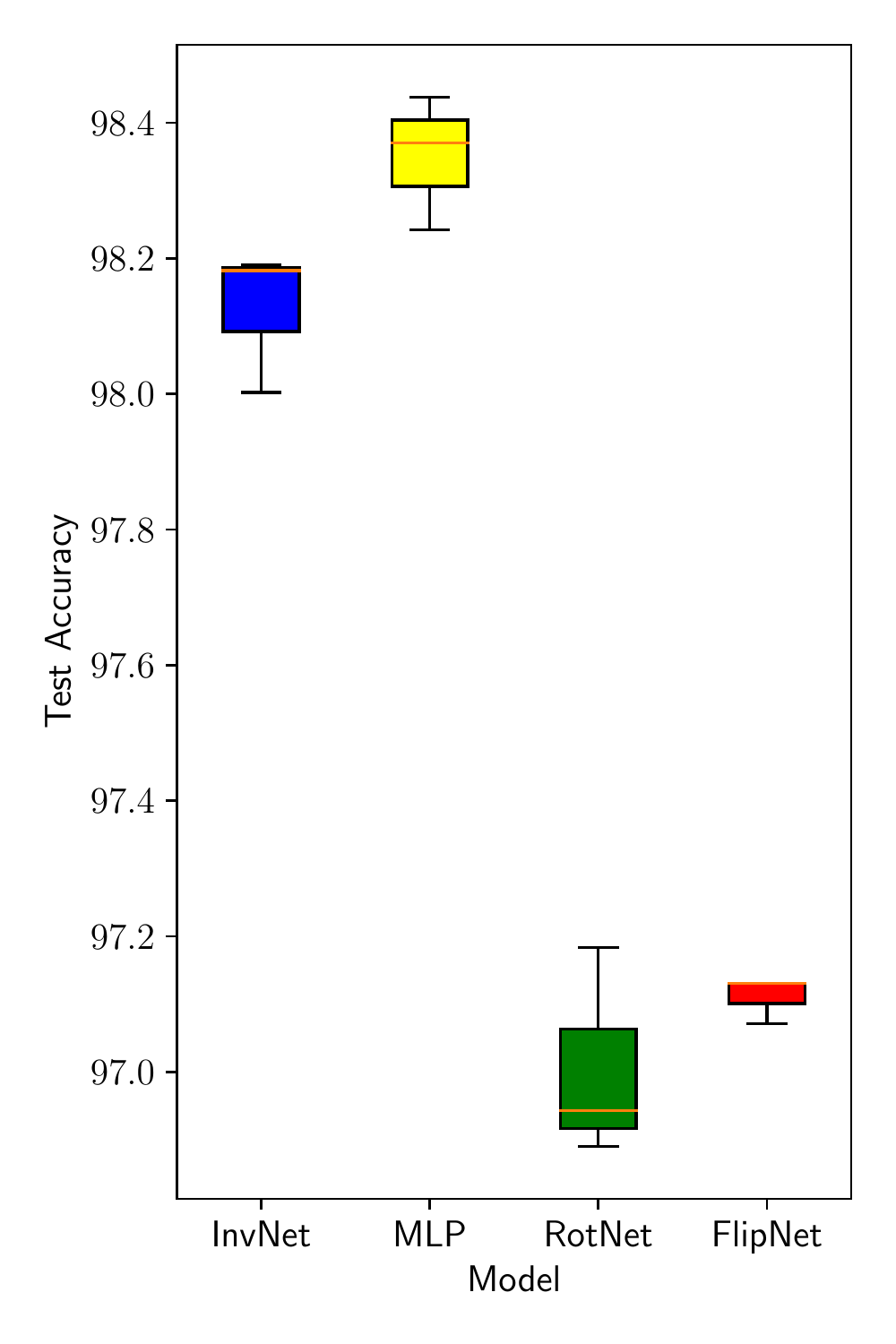}
         \caption{Permuted dataset}
         \label{fig:box-permuted}
     \end{subfigure}
     \begin{subfigure}[b]{0.20\textwidth}
         \centering
         \includegraphics[width=\textwidth]{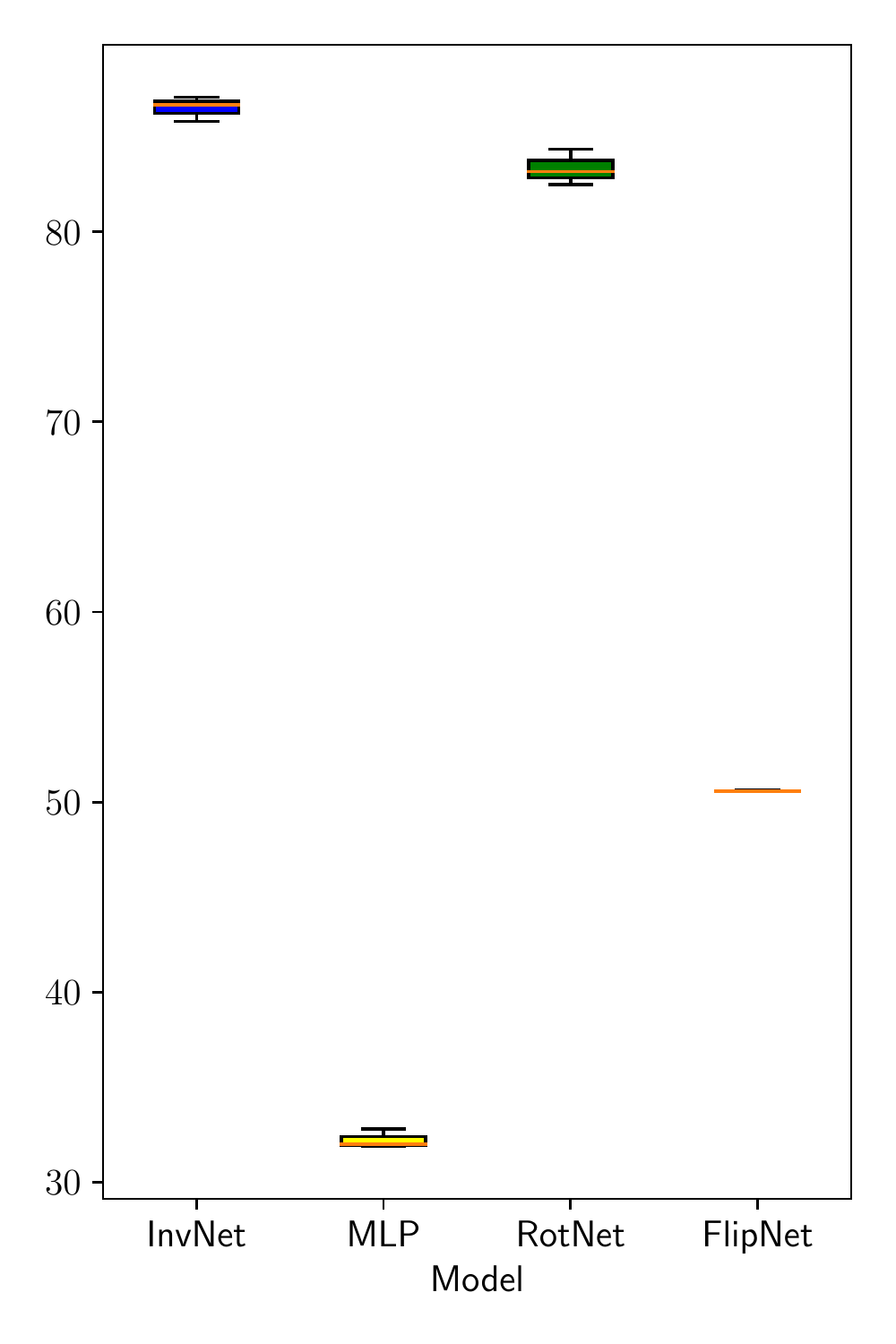}
         \caption{Rotated dataset}
         \label{fig:box_rotated}
     \end{subfigure}
    \begin{subfigure}[b]{0.20\textwidth}
         \centering
         \includegraphics[width=\textwidth]{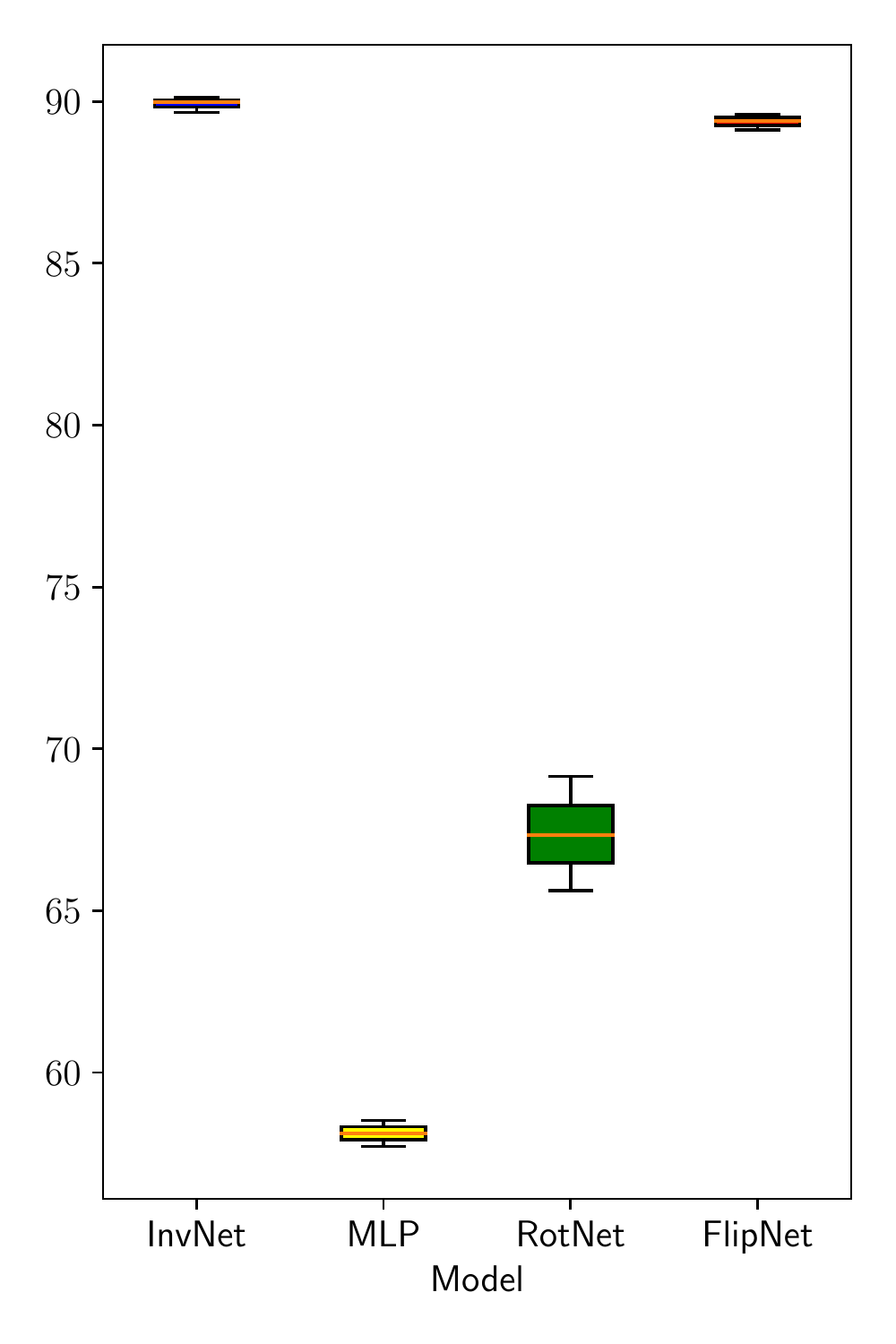}
         \caption{Flipped dataset}
         \label{fig:box-flipped}
     \end{subfigure}
        \caption{Test accuracies across different models and datasets. When symmetry is present in the data, our model performs competitively with the ``correct'' hand-engineered models, while discovering the respective invariances directly from the data.}
        \label{fig:boxplots}
\end{figure*}

We conducted experiments on the MNIST dataset \citep{lecun1998mnist}. Hence, our (flattened) input space is $\mathcal{X}=\mathbb{R}^{784}$, since the images are grayscale and of size $28\times28$. We focus on two groups of symmetries: Rotations of $k \times 90$ degrees, for $k \in \left\{0, 1, 2, 3 \right\}$, as well as vertical and horizontal flips.

We first carry out a sanity check for our model, by performing data augmentation.  Here, we assess whether our model can detect if these invariances are present in the data.  
Furthermore, we compare its test accuracy with that of other frequentist networks: two invariant architectures as well as one multi-layer perceptron (MLP).
Afterwards, we examine how ours and other non-invariant models (with respect to flips and rotations) perform when they are trained without data augmentation, but tested on transformed inputs.

\subsection{Validation of our model}

We used three different versions of MNIST. Firstly, one where all the pixels were randomly permuted according to a fixed permutation of size $784$. This allows the network to still recognize patterns in the images and perform its task, but also creates a situation where there is no symmetry present in the data. For the last two variations, we respectively flip and rotate some of the images. In both cases, every digit was transformed in exactly one way during training. This enables us to challenge the models regarding their extrapolating capacity to other configurations of the inputs. To this end, during testing, we randomly transform all the points. In addition, when it comes to rotations, all the images labeled with 6 or 9 were removed, as they are identical when rotated. An illustration of these datasets can be found in Figure~\ref{fig:digits}.

As mentioned above, we trained three models on these datasets: Ours, which we call InvariantNet, as well as its two frequentist analogues, that is, multi-layer perceptron (MLP) networks with the proper weight-sharing scheme between their input and first hidden layers.
These two are called FlipNet and RotNet. Lastly, we used a standard MLP for comparison.
We note that in this simple experiment, we are in the case where $q=1$. Hence, InvariantNet has two different versions, one for rotations and one for flips.

More details on the architectures of the networks and their training are available in Appendix~\ref{sec:imple}.

As expected, InvariantNet learns any invariance that was present in the data. This is confirmed by Figures~\ref{fig:maps-permuted}, ~\ref{fig:maps-rotated}, ~\ref{fig:maps-flipped}, where one can observe the trajectories of the MAP estimates of $\boldsymbol{\pi}$ during training for all three versions of MNIST.
Regarding the performances of the different models, which can be inspected in Figure~\ref{fig:boxplots}, we note that in the transformed cases, our model performed competitively with (and even slightly surpassed) the manually engineered invariant networks, namely RotNet and FlipNet. This is probably explained by the prior-regularization term. Moreover, we see that it was also second-best on the permuted dataset, closely approaching the performance of the MLP. 

\subsection{Out-of-distribution extrapolation}

\begin{table}[t]
\caption{Classification accuracies for different models trained on plain MNIST and tested on transformed data. Our model outperforms all the baselines on the transformed test sets.}

\label{table:acc}
\vskip 0.15in
\begin{center}
\begin{small}
\begin{sc}
\begin{tabular}{lcccr}
\toprule
Model & Plain & Rotated & Flipped \\
\midrule
InvNet    & 94.2 $\pm$ 0.2& \textbf{58.2 $\pm$ 1.3}&  \textbf{72.3 $\pm$ 1.5} \\
CNN    & \textbf{99.0 $\pm$ 0.2}& 47.2 $\pm$ 1.1& 63.9 $\pm$ 1.0 \\
MLP    & 98.3 $\pm$ 0.3& 43.3 $\pm$ 0.6& 64.1 $\pm$ 0.4  \\
BMLP    & 97.4 $\pm$ 0.2& 43.1 $\pm$ 0.9& 63.5 $\pm$ 0.8         \\
BCNN     & 98.6 $\pm$ 0.1& 47.0 $\pm$ 0.5& 63.1 $\pm$ 0.4 \\

\bottomrule
\end{tabular}
\end{sc}
\end{small}
\end{center}
\vskip -0.1in
\end{table}

We now focus on the normal MNIST, where we did not synthetically build in any symmetries and study the case $q=2$, that is, we consider being invariant to both groups simultaneously. Again, by making an uninformative choice for the hyperparameter of the Dirichlet distribution, we encourage the model to take into account all possible choices, namely being invariant to rotations, flips, or having no sharing at all.  According to Figure~\ref{fig:maps-plain}, the model believes that it is almost equally probable that rotations or flips could be present. 
To test whether this would lead to better generalization, we trained our model and three baselines on the plain dataset without augmentation.
Specifically, we chose frequentist and Bayesian MLPs and CNNs \cite{bcnn}. For the Bayesian models, we used a standard Gaussian prior and approximated the posterior distribution with Bayes-by-Backprop, using the local reparametrization trick \cite{kingma2015variational}.

We see in Table~\ref{table:acc} that InvariantNet clearly outperforms all the non-invariant models when it is asked to infer the classes of transformed examples. This comes at the price of a slightly lower accuracy on plain images, which nonetheless remains quite competitive. This is because the network sacrifices some of its flexibility, which would make it perform better on normal examples, in order to be invariant to flips and rotations during the inference stage.

\section{Conclusion}

In this paper, we presented a novel probabilistic approach to performing (approximately) invariant classification. Our method is general and can be applied to an arbitrary number of groups, as long as matrix representations of the desired symmetries are available.
The proposed model is able to learn invariances of interest from augmented data, as well as to generalize well to examples that were not included during the augmentation.
Moreover, we have shown that our network extrapolates well to out-of-distribution samples when trained without data augmentation, compared to other non-invariant models.

An avenue for future work could be to move from MAP estimation to approximation of the full posterior distribution over the parameters of the Categorical distribution. What is more, to obtain uncertainty estimates of the predictions, one could treat all the weights of the network stochastically, thus turning it into a fully Bayesian neural network. Finally, one could try to be completely agnostic and also learn the weight-sharing scheme from the data, by defining a proper priors over the symmetry matrices. 

\bibliographystyle{icml2021}
\bibliography{references}

\clearpage
\appendix

\begin{figure*}[h!]
     \centering
     \begin{subfigure}[b]{0.2\textwidth}
         \centering
         \includegraphics[width=\textwidth]{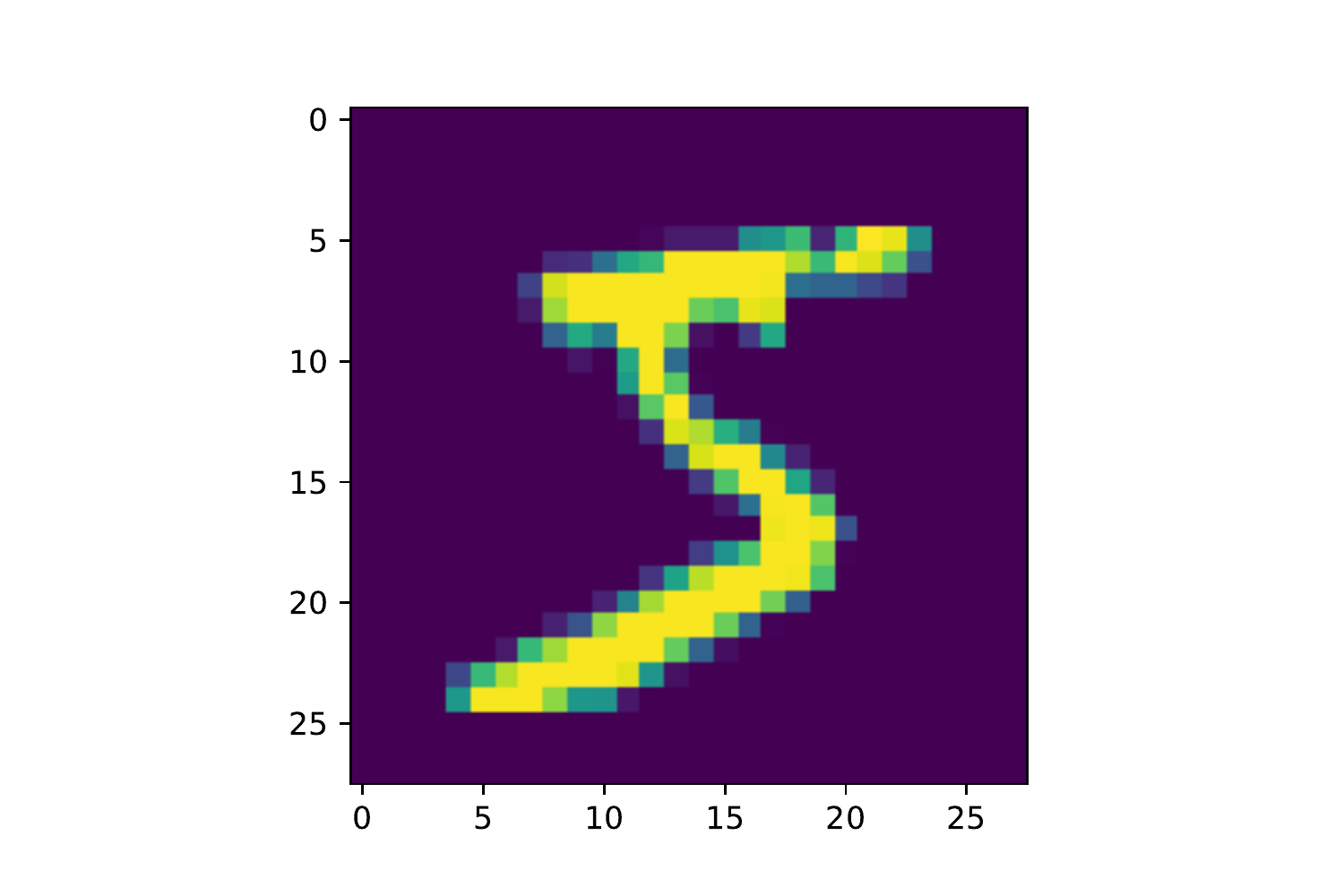}
         \caption{Normal dataset}
         \label{fig:digits-plain}
     \end{subfigure}
     \begin{subfigure}[b]{0.2\textwidth}
         \centering
         \includegraphics[width=\textwidth]{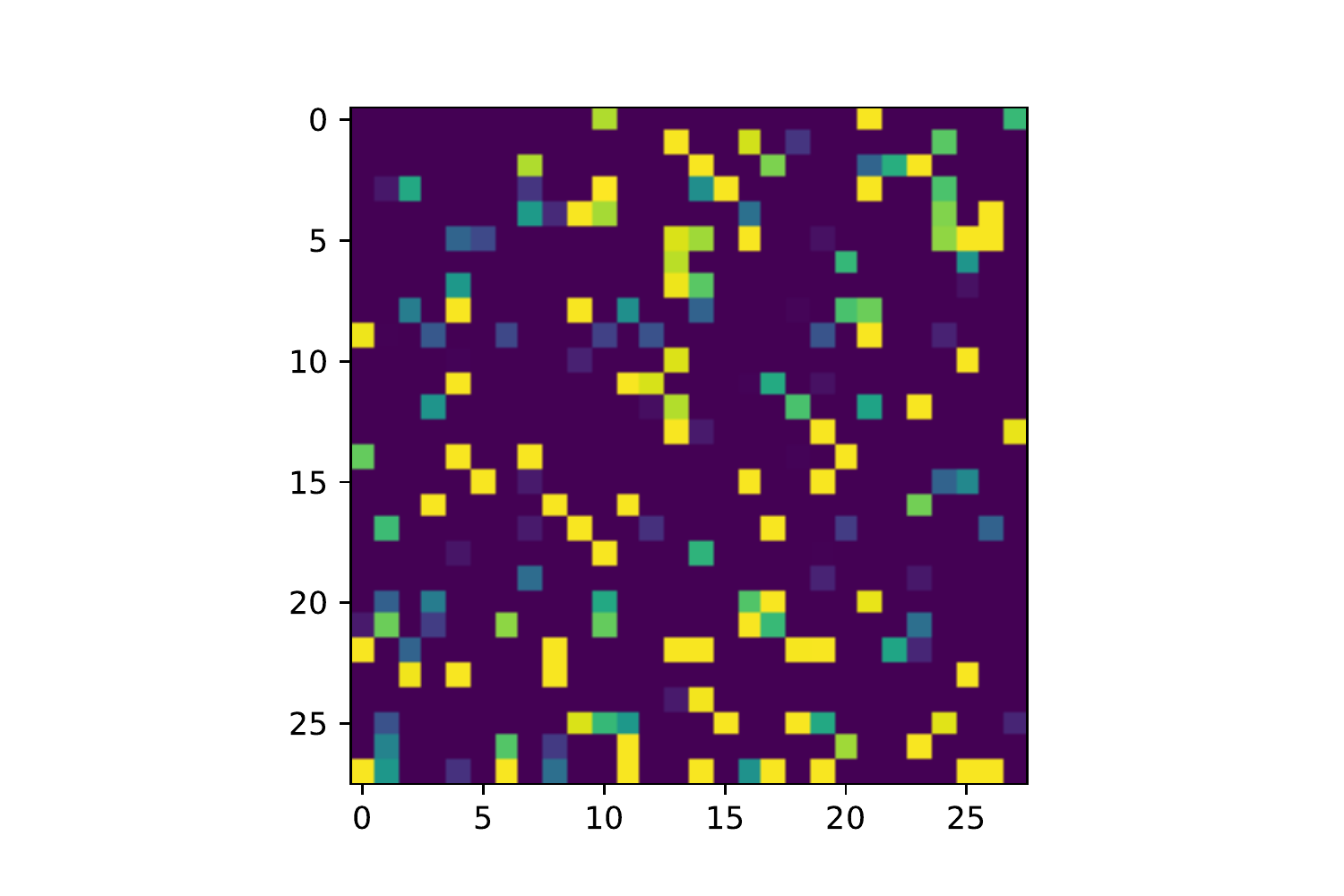}
         \caption{Permuted dataset}
         \label{fig:digits-permuted}
     \end{subfigure}
     \begin{subfigure}[b]{0.2\textwidth}
         \centering
         \includegraphics[width=\textwidth]{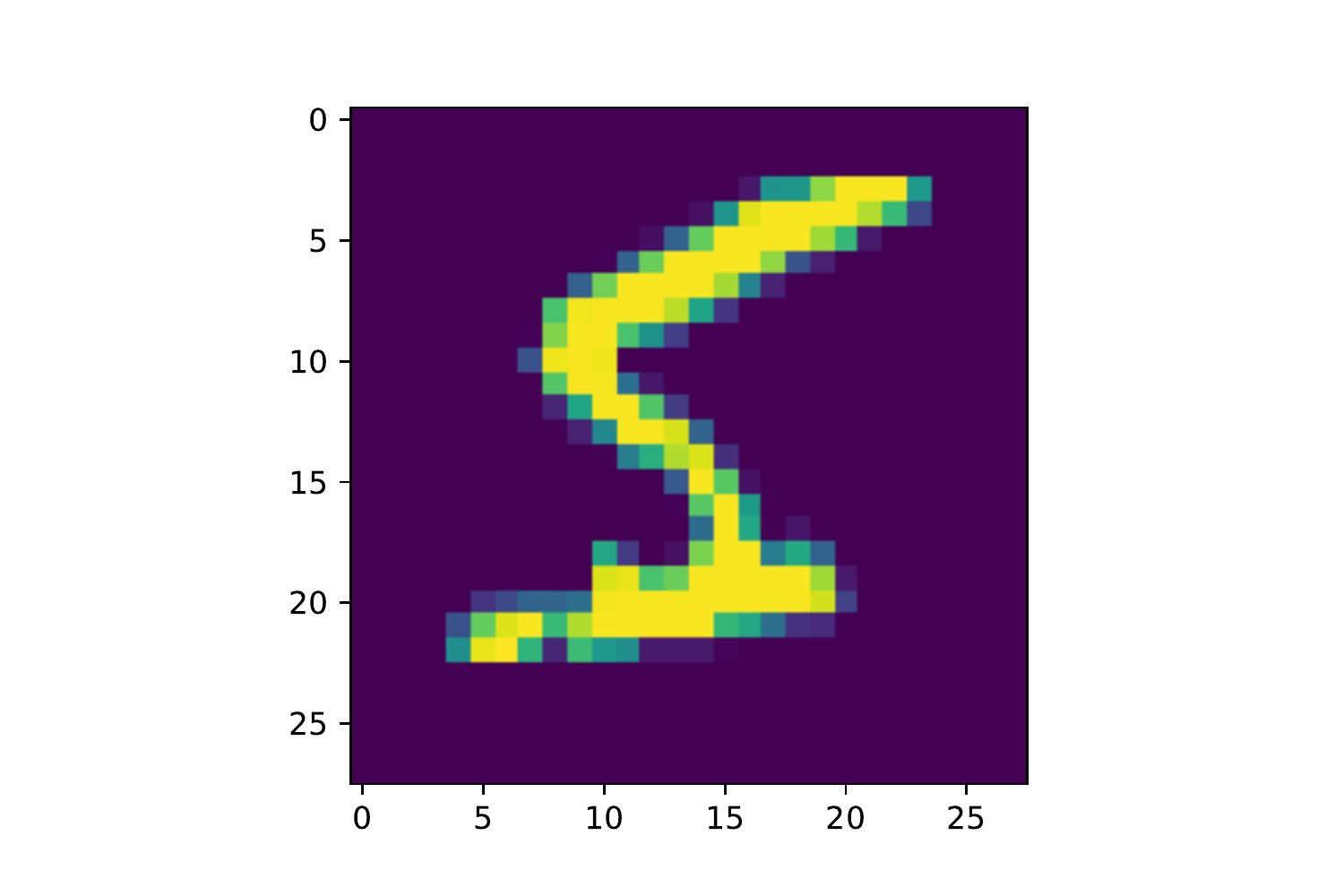}
         \caption{Rotated dataset}
         \label{fig:digits-rotated}
     \end{subfigure}
    \begin{subfigure}[b]{0.2\textwidth}
         \centering
         \includegraphics[width=\textwidth]{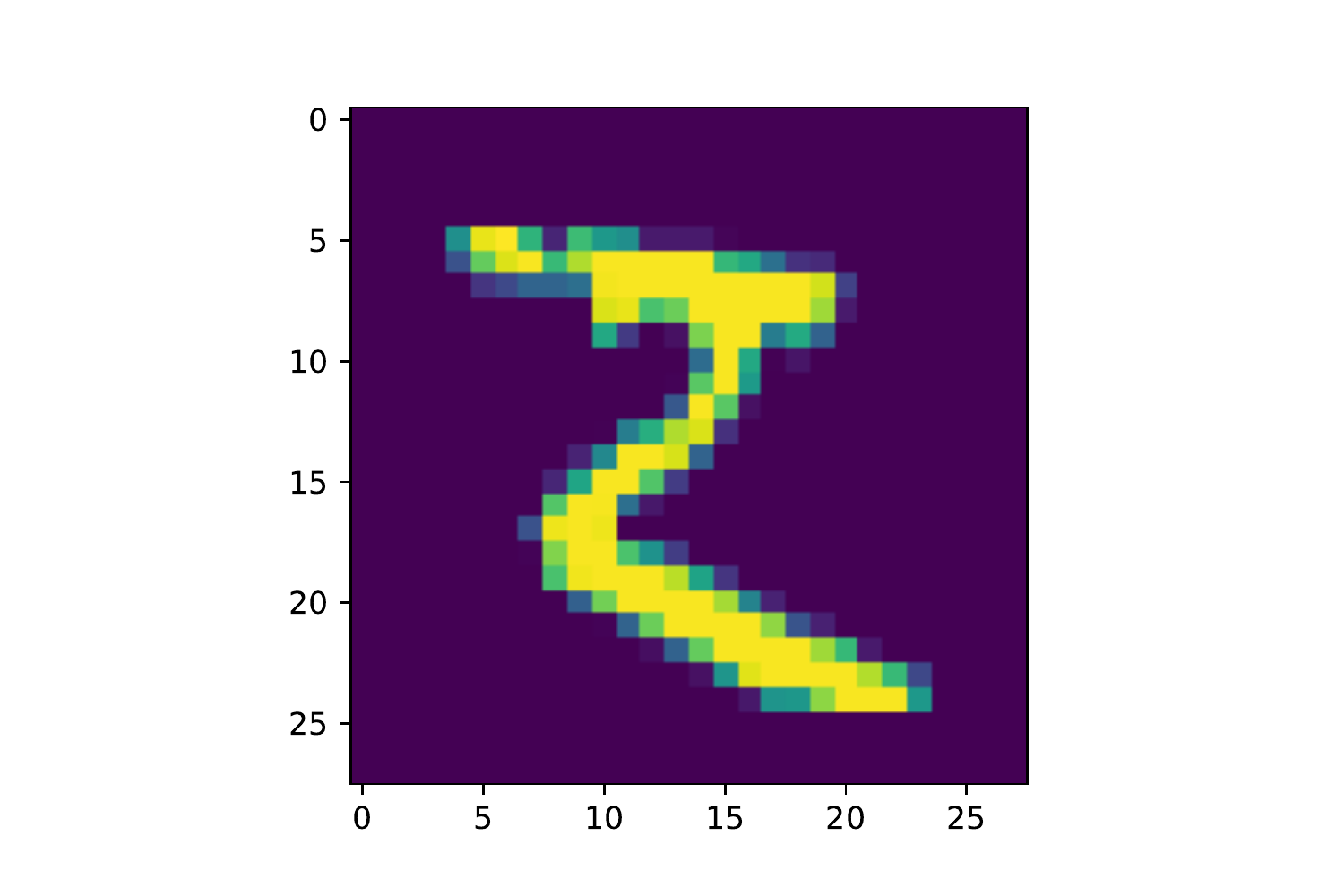}
         \caption{Flipped dataset}
         \label{fig:digits-flipped}
     \end{subfigure}
        \caption{Examples for the four datasets.}
        \label{fig:digits}
\end{figure*}

\section{Derivations}
\label{sec:algebra}

We first introduce in more detail some algebraic concepts that are used in our method. Then, we will see how one can construct a weight-sharing scheme with respect to a given group.  We follow \citet{mouli2021neural}.

Let $G$ be a finite group of linear automorphisms over $\mathbb{R}^d$. As a reminder, an automorphism is a linear transformation from $\mathbb{R}^d$ to itself, which is also a bijection.

We now prove the following two lemmas regarding the invariance of the Reynolds operator and the construction of invariant neurons.
\begin{lemma}
Suppose $\overline{T}$ is the Reynolds operator of the group $G$. Then $\overline{T}$ is $G$-invariant, that is, it holds that: $\overline{T}(T x)=\overline{T}(x)$,  for all $x \in \mathbb{R}^d$ and $T \in G$. Furthermore, $\overline{T}$ is a projection, that is, $\overline{T}^2=T$.
\end{lemma}
\begin{proof}
Fix $F \in G$. Then we compute:
$$
\overline{T} \circ F=\frac{1}{|G|} \sum_{T \in G} T \circ F=\frac{1}{|G|} \sum_{T^{\prime} \in G^{\prime}} T^{\prime},
$$

where $G^{\prime}=\left\{T \circ F: \forall T \in G\right\}$. If we had that $G^{\prime}=G$, then we would be done. Indeed, as groups are closed under multiplication, it holds that $T \circ F \in G$ for an arbitrary $T \in G$ and as a consequence,  $G^{\prime} \subseteq  G$. Now, as $F$ is a bijection, by multiplying with $F^{-1}$ from the right, we get that $T_1 \circ F=T_2 \circ F$ if and only if $T_1=T_2$, where $T_1, T_2$ are arbitrary elements of $G$. Thus, $|G|=|G^{\prime}|$ and the result follows.

Now, we will show that $\overline{T}$ is a projection. Observe that:
$$\overline{T}^2= \overline{T}\left (\frac{1}{|G|}\sum_{T \in G}T\right) = \frac{1}{|G|}\sum_{T \in G} \overline{T}\circ T=\overline{T},$$
where in the second equality we used the linearity of $\overline{T}$ and its $G$-invariance. 
\end{proof}

It is easy to verify that the eigenvalues of a projection are either 0 or 1. To see that, fix a non-zero eigenvector $w$ of $\overline{T}$. Then, $w^\top\overline{T}^2= (w^\top \overline{T})\overline{T}= \lambda w^\top \overline{T}=\lambda^2 w^\top$. On the other hand, $w^\top \overline{T}^2=w^\top \overline{T}= \lambda w^\top$. Hence, we must have that $\lambda=0$ or $1$.

Let $\Lambda = \left\{ w \in \mathbb{R}^d: w^\top \overline{T}= w^\top \right\}$ be the eigenspace of $\overline{T}$, which corresponds to the eigenvalue 1. Then we have the following result that characterizes $G$-invariant neurons:

\begin{lemma}
Let $b \in \mathbb{R}$, $w \in \mathbb{R}^d$ and $f(x)=w^\top x+b$, for $x \in \mathbb{R}^d$. Then, for an arbitrary $T \in G$, we have $f(Tx)=f(x)$ if and only if $w \in \Lambda$. 
\end{lemma}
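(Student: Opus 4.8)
The plan is to reduce the statement about the affine neuron $f$ to a purely linear condition on $w$, and then identify that condition with membership in $\Lambda$. I read the biconditional in the only sensible way: $f(Tx)=f(x)$ holds for every $x\in\mathbb{R}^d$ and every $T\in G$ if and only if $w\in\Lambda$. (A strictly per-$T$ reading cannot be intended, since for $T$ the identity the left-hand side is vacuously true while $w\in\Lambda$ is a genuine constraint.) First I would observe that the bias cancels: $f(Tx)-f(x)=\bigl(w^\top T+b\bigr)-\bigl(w^\top x+b\bigr)=(w^\top T-w^\top)x$. Since a linear functional that annihilates every vector is zero, requiring this to vanish for all $x$ is equivalent to the matrix identity $w^\top T=w^\top$. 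Thus the lemma is equivalent to the claim that $w^\top T=w^\top$ for all $T\in G$ exactly when $w^\top\overline{T}=w^\top$.

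For the forward direction, suppose $w^\top T=w^\top$ for every $T\in G$. Averaging over the group then gives $w^\top\overline{T}=\frac{1}{|G|}\sum_{T\in G}w^\top T=\frac{1}{|G|}\sum_{T\in G}w^\top=w^\top$, so $w\in\Lambda$. This direction is routine and uses nothing beyond the definition of $\overline{T}$.

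The backward direction is where the actual work lies, and it is exactly where the previous lemma is needed. From the $G$-invariance of the Reynolds operator established there, $\overline{T}(Tx)=\overline{T}(x)$ for all $x$, which as a matrix identity reads $\overline{T}\,T=\overline{T}$ for every $T\in G$. Now assume $w\in\Lambda$, i.e. $w^\top\overline{T}=w^\top$. Fixing any $T\in G$ and substituting this identity once, I compute $w^\top T=(w^\top\overline{T})\,T=w^\top(\overline{T}\,T)=w^\top\overline{T}=w^\top$. Since $T$ was arbitrary, $w$ is fixed by every group element, which by the reduction above is equivalent to $f(Tx)=f(x)$ for all $T\in G$, completing the proof.

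The only genuine obstacle is this backward implication, and its essential ingredient is that $\overline{T}$ absorbs right multiplication by group elements, $\overline{T}\,T=\overline{T}$; without the $G$-invariance from the previous lemma one could not pass from ``fixed by the average $\overline{T}$'' to ``fixed by each individual $T$''. Everything else — the cancellation of the bias and the averaging argument — is bookkeeping.
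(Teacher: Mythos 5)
Your proof is correct, and it is both cleaner and more complete than the paper's. For the direction the paper actually proves ($w\in\Lambda\Rightarrow f(Tx)=f(x)$), both arguments hinge on the same key fact from the preceding lemma, namely $\overline{T}\,T=\overline{T}$; but the paper first expands $w$ in an orthogonal eigenbasis $(w_i)$ of $\Lambda$ and pushes the invariance through term by term, whereas you use the defining property $w^\top\overline{T}=w^\top$ of $\Lambda$ directly to get $w^\top T = w^\top\overline{T}\,T = w^\top\overline{T} = w^\top$. The basis expansion buys nothing here, since membership in $\Lambda$ as the paper defines it is literally the identity you need, so your route is the more economical one. More substantively, the paper explicitly proves only this one direction, while you also supply the converse: reducing the affine condition to $w^\top T=w^\top$ for all $T\in G$ (after correctly resolving the quantifier ambiguity in the statement and cancelling the bias) and then averaging over the group to conclude $w^\top\overline{T}=w^\top$. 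That averaging argument is the piece the paper leaves out, so your write-up actually establishes the full biconditional claimed in the lemma. The only blemish is a typo in your bias-cancellation display, where $w^\top T$ should read $w^\top Tx$; the intent is unambiguous.
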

\begin{proof}
We only prove the first direction. Consider the orthogonal basis $(w_i)_{i=1}^m$ of $\Lambda$, which is constituted of the eigenvectors that correspond to the eigenvalue 1.

Fix $0 \neq w \in \Lambda$. Then, we can find $(c_i)_{i=1}^m \subseteq \mathbb{R}$, such that $w^\top= \sum_{i=1}^{m} c_i w_{i}^\top= \sum_{i=1}^m c_i w_{i}^\top \overline{T}$, since $w_{i}^\top \overline{T}=w_{i}^\top$ for all $i=1,...,m$. 
Now, for $x \in \mathbb{R}^d$ and $T \in G$, we have that:
$$
\begin{aligned}
f(Tx) &=w^\top (Tx)+b \\
&=\sum_{i=1}^{m}  w_{i}^{T} c_{i}  \overline{T}(Tx)+b \\
&=\sum_{i=1}^{m}  w_{i}^{T} c_{i} \overline{T} x+b \\
&= w^\top \overline{T}x+b \\
&= w^\top x+b \\
&= f(x),
\end{aligned}
$$
where in the third equality we used the invariance of $\overline{T}$ and in the fifth one that $w \in \Lambda$. 
\end{proof}

Because of the above, one way to construct a $G$-invariant neuron is to constraint it to lie in $\Lambda$. Hence, to do so, we proceed as follows: We compute $\overline{T}$; we find the basis of $\Lambda$; finally, we learn the coefficients $(c_i)_{i=1}^m$.

We give a specific example on how to compute the Reynolds operator for the the following group of rotations: $G= \left \{ Id, T_{90}, T_{180}, T_{270} \right \}$. For simplicity, we consider that we have inputs of size $2\times2$, and consequently our (flattened) input domain is $\mathcal{X}=\mathbb{R}^4$. Constructions for inputs of higher dimensions are similar. The corresponding matrix representations of $G$ are the following:

$$\left \{ 
\left[\begin{array}{llll}
1 & 0 & 0 & 0\\
0 & 1 & 0 & 0\\
0 & 0 & 1 & 0 \\
0 & 0 & 0 & 1
\end{array}\right], 
\left[\begin{array}{llll}
0 & 1 & 0 & 0\\
0 & 0 & 0 & 1\\
1 & 0 & 0 & 0 \\
0 & 0 & 1 & 0
\end{array}\right], 
\right.$$ \\
$$\left.
\left[\begin{array}{llll}
0 & 0 & 0 & 1\\
0 & 0 & 1 & 0\\
0 & 1 & 0 & 0 \\
1 & 0 & 0 & 0
\end{array}\right], 
\left[\begin{array}{llll}
0 & 0 & 1 & 0\\
1 & 0 & 0 & 0\\
0 & 0 & 0 & 1 \\
0 & 1 & 0 &0
\end{array}
\right] 
\right \}$$

If we sum these matrices and divide by the cardinality of $G$, which is 4, we get the representation for the Reynolds operator:
$\overline{T}=\frac{1}{4} \mathbf{I}_{4 \times 4}$,  where $ \mathbf{I}_{4 \times 4}$ is the $4\times 4$ square matrix whose entries are all $1$. One can easily check that it is $G$-invariant, that is, it holds that $\overline{T}T_{k}=\overline{T}$, for $k  \in \left \{0, 90, 180, 270 \right \}$. Lastly, in this case, $\Lambda=span\left \{ (1, 1, 1, 1)^\top \right \}$, hence the parameter-sharing scheme is simply $V=(1,1,1, 1)^\top$.

\section{Implementation Details}
\label{sec:imple}

For the first experiment, we used three variants of the MNIST dataset \cite{lecun1998mnist}: One where we permuted all the pixels, and two augmented versions with flips and rotations. The digits 6 and 9 were removed for rotations. Examples can be found in Figure~\ref{fig:digits}. 

All neural networks architectures had one hidden layer of width 100 and were trained until convergence, using early stopping. 

Finally, for the second experiment, we gave more capacity to InvariantNet, by using a hidden layer of size 200. For its training, we used $\operatorname{Dir}(2,2,2)$ as a prior and temperature 1.

\end{document}